\documentclass[10pt,journal,compsoc]{IEEEtran}
\usepackage{cite}
\usepackage{url}
\usepackage{amsmath,amssymb,amsfonts}
\usepackage{graphicx}
\usepackage{textcomp}
\usepackage{epstopdf}
\usepackage{algorithm}
\usepackage{algpseudocode}
\usepackage{booktabs}
\usepackage{cases}
\usepackage{array}
\usepackage{caption}
\usepackage{subfigure}
\usepackage{verbatim}
\usepackage{multirow} 
\usepackage{bm}
\usepackage{float}
\usepackage{tablefootnote} 
\usepackage{empheq}
\usepackage{amsthm}

\newtheorem{lemma}{Lemma}

\hyphenation{op-tical net-works semi-conduc-tor}

\begin{document}

\title{Subspace Nonnegative Matrix Factorization for Feature Representation}
\author{Junhang~Li, Jiao~Wei, Can~Tong, Tingting~Shen, Yuchen~Liu, Chen~Li, Shouliang~Qi, Yudong~Yao,~\IEEEmembership{Fellow,~IEEE,}~and~Yueyang Teng$^*$%

\thanks{\indent This work was supported by the Fundamental Research Fund for the Central Universities of China (N180719020).}%
       
\IEEEcompsocitemizethanks{\IEEEcompsocthanksitem J. Li, J. Wei, C. Tong, T. Shen, Y. Liu, C. Li and S. Qi are with the College of Medicine and Biological Information Engineering, Northeastern University, Shenyang 110169, China.
\IEEEcompsocthanksitem Y. Yao is with the Department of Electrical and Computer Engineering, Stevens Institute of Technology, Hoboken, NJ 07030, USA.
\IEEEcompsocthanksitem Y. Teng is with the College of Medicine and Biological Information Engineering, Northeastern University, Shenyang 110169, China; and the Key Laboratory of Intelligent Computing in Medical Image, Ministry of Education, Shenyang 110169, China (email: tengyy@bmie.neu.edu.cn).}}


\IEEEtitleabstractindextext{%
\begin{abstract}
Traditional nonnegative matrix factorization (NMF) learns a new feature representation on the whole data space, which means treating all features equally. However, a subspace is often sufficient for accurate representation in practical applications, and redundant features can be invalid or even harmful.
For example, if a camera has some sensors destroyed, then the corresponding pixels in the photos from this camera are not helpful to identify the content, which means only the subspace consisting of remaining pixels is worthy of attention.
This paper proposes a new NMF method by introducing adaptive weights to identify key features in the original space so that only a subspace involves generating the new representation. Two strategies are proposed to achieve this: the fuzzier weighted technique and entropy regularized weighted technique, both of which result in an iterative solution with a simple form. Experimental results on several real-world datasets demonstrated that the proposed methods can generate a more accurate feature representation than existing methods. The code developed in this study is available at https://github.com/WNMF1/FWNMF-ERWNMF.
\end{abstract}

\begin{IEEEkeywords}
adaptive weight, entropy regularizer, fuzzier, nonnegative matrix factorization (NMF).
\end{IEEEkeywords}}

\maketitle
\IEEEraisesectionheading{\section{Introduction}\label{sec:introduction}}
\IEEEPARstart{T}{oday}, various kinds of data are involved in all aspects of people's lives, and the analysis and processing of large-scale data are occupying an increasingly important position in the field of scientific research. However, as the dimension and amount of data increases, an increasing computational load is required to process them, and even redundant information may do harm to the analysis result. Therefore, dimensionality reduction techniques \cite{van2009dimensionality} are often used to overcome the problem of the curse of dimensionality. These techniques include principal component analysis (PCA) \cite{2002Principal}, isometric mapping (ISOMAP) \cite{tenenbaum2000global}, local linear embedding (LLE) \cite{roweis2000nonlinear}, Laplacian eigenmaps \cite{belkin2003laplacian} and nonnegative matrix factorization (NMF) \cite{lee1999learning}. 

Among them, NMF has good interpretability, dramatically satisfies real-world needs and has become one of the popular methods for dimensionality reduction. Lee and Seung \cite{wang2012nonnegative, ma2017nonnegative,lee2000algorithms} replaced the original high-dimensional nonnegative matrix with the product of two low-dimensional nonnegative matrices, which are called the base matrix and representation matrix. The nonnegative constraints result in a partial rather than global representation. The researchers proposed two objective functions based on the Euclidean distance and Kullback-Leibler divergence as the similarity measure and then optimized them by a simple multiplicative update rule with alternating optimization. 

After that, many NMF variants were proposed targeting different tasks. For example, Babaee \emph{et al.} \cite{babaee2016discriminative} proposed constrained NMF (CNMF) to improve the performance of the algorithm by label information to restrict the base matrix. Li \emph{et al.} \cite{liu2010non} proposed the local NMF (LNMF), which adds sparse constraints to the base matrix to preserve the important information in the base matrix. Hoyer \emph{et al.} \cite{hoyer2004non} proposed a sparse NMF (SNMF) to obtain a sparser representation by adding sparse encoding. Peharz \emph{et al.}\cite{peharz2012sparse} proposed a generic alternating update scheme for $\ell_0$-sparse NMF, which accelerates the computation and makes the results more accurate. The graph regularized NMF was proposed by using the nearest neighbor graph to explore the intrinsic geometric structure of the data \cite{cai2010graph,wang2013multiple}. This approach ensures that the data have as much as possible the same intrinsic structure in the low- and high-dimensional spaces. Choi \emph{et al.} \cite{choi2008algorithms} presented a simple algorithm for orthogonal NMF (ONMF), where orthogonality constraints are imposed on the base or representation matrix. Ding \emph{et al.} \cite{ding2006orthogonal} proposed orthogonal 3-factor factorization, which makes the learned representation more accurate. Subsequently, Yoo \emph{et al.} \cite{yoo2010orthogonal} proposed orthogonal nonnegative matrix triple factorization (NMTF) directly using the real gradient information optimization algorithm on Stiefel manifolds. Ding \emph{et al.} \cite{ding2008convex} proposed the SemiNMF method, which allows the existence of negative elements in the original and base matrices, thus making NMF widely available. They also restricted the base vectors to convex combinations of data points to develop ConvexNMF.

Many researchers have found that some redundant features impair representation, meaning that only a subspace of the original data is interesting, so they introduced weighted NMF (WNMF). Guillamet \emph{et al.} \cite{guillamet2003introducing} proposed a WNMF method by precalculating the weights of different types of samples in the original data to solve the problem caused by sample imbalance. Mao \emph{et al.} \cite{mao2004modeling} proposed a WNMF that incorporates binary weights into NMF multiplicative updates for handling missing values in distance matrices. After that, Kim \emph{et al.} \cite{kim2009weighted} developed two relatively fast and scalable algorithms for WNMF: alternating nonnegative least squares (ANLS-WNMF) and generalized expectation maximization (GEM-WNMF). Lu \emph{et al.} \cite{lu2009doubly} proposed a new weighting method called double WNMF, which utilizes two weighting matrices, between-sample and within-sample weighting matrices, to better exploit samples' discriminative and geometric information for imbalanced face recognition. Gao \emph{et al.} \cite{gao2016minimum} proposed a minimum-volume-regularized weighted SNMF (MV-WSNMF) based on the relationship between robust NMF and SNMF. The method can flexibly approximate the similarity matrix by introducing a weight matrix, thus making the obtained performance more robust to noise. 

These methods can solve the dimensionality redundancy problem well by using predefined weights, which may be called "hard WNMF." However, prior knowledge of the weights is generally unavailable, so it is a significant task to develop "soft WNMF," which flexibly assigns weights that do not need to be determined in advance.

This paper provides an adaptive weighted method to pursue a subspace of the original data for NMF, which assigns a weight to each feature to indicate its importance. The weights can be explained as the probability of the contribution of a feature to NMF. We introduce two strategies to solve the problem: fuzzier weight and entropy regularized weight, both of which are inspired by the derivation of fuzzy c-means clustering \cite{zhou2016fuzzy}. The former uses power hyperparameters to smooth the weight distribution, and the latter utilizes the entropy
to penalize the weights. Then, the Lagrange multiplier method is used to obtain two solutions with a simple form. The proposed methods are compatible with many existing NMF methods including the previously mentioned ONMF\cite{choi2008algorithms}, SemiNMF\cite{ding2008convex}, and ConvexNMF\cite{ding2008convex}. Experiments are performed on several real-world datasets, and the results show the feasibility and effectiveness of the proposed methods.

\section{Methodology}
\subsection{Related works}
$\mathbf{Notations:}~$~For the given matrix $A$, the $(i, j)-th$ element is indicated as $A_{ij}$. $A^T$ denotes the transpose of $A$. The symbols $\otimes$ and $\oslash$ respectively denote item-by-item multiplication and division of
two matrices. $A\geq 0$ means that all elements of $A$ are equal to or larger than 0. Moreover, diag$(\cdot)$ generates a diagonal matrix from a vector.

Let $X\in R^{M\times N}$ be the given nonnegative
matrix in which each column is a data point. NMF will approximate it by the product of two low-dimensional nonnegative matrices $ S \in R^{M\times K}$ and $H\in R^{K \times N}$, where $K<<\min\{M,N\}$. We choose the square of the Euclidean distance as the metric, and the problem is formulated as follows:
\begin{equation}
\begin{aligned}
\  \min&~ F_1(S,H)=\sum_{i=1}^M \sum_{j=1}^N \left[X_{ij}-(SH)_{ij}\right]^2\\
s.~t.&~ S\geq 0,~ H \geq 0
\label{eq1}
\end{aligned}
\end{equation}

To alternatively minimize $S$ and $H$ in Eq. (\ref{eq1}), the construction of an auxiliary function is important to determine the iterative update rule.\\
$\mathbf{Definition~1} ~(Auxiliary~function)$ If the function $G(h,h')$ satisfies the following conditions:
\begin{equation}
G(h,h')\ge F(h)~~~and~~~G(h',h')=F(h')
\label{eq2} 
\end{equation}
where $h'$ is a given value, then $G(h,h')$ is the auxiliary function of $F(h)$ on $h'$. 

Thus, we can draw the following conclusion.\\
\begin{lemma} If $G(h,h')$ is an auxiliary function of $F(h)$, then under the update rule:
	\begin{equation}
	h^*=arg\mathop{\min}_{h}G(h,h')
	\label{eq3}
	\end{equation}
	the function $F(h)$ does not increase. \\
\end{lemma}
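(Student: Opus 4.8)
The plan is to prove the monotonicity by chaining the two defining inequalities of the auxiliary function together with the optimality of $h^*$. Concretely, the goal is to show $F(h^*) \le F(h')$, which is exactly the assertion that one application of the update rule in Eq. (\ref{eq3}) does not increase $F$.

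First I would invoke the first defining property of the auxiliary function, $G(h,h') \ge F(h)$ for every $h$ (the left condition in Eq. (\ref{eq2})), and specialize it to the updated point $h=h^*$. This produces the opening inequality $F(h^*) \le G(h^*,h')$, which bounds the quantity of interest from above by a value of $G$. Next I would use the fact that $h^*$ is, by Eq. (\ref{eq3}), a minimizer of the map $h \mapsto G(h,h')$; since $h'$ is itself a competitor in this minimization, we obtain $G(h^*,h') \le G(h',h')$. Finally, the second defining property $G(h',h') = F(h')$ (the right condition in Eq. (\ref{eq2})) collapses the right-hand side back onto $F$. Concatenating these three relations yields
\begin{equation}
F(h^*) \le G(h^*,h') \le G(h',h') = F(h'),
\end{equation}
which is precisely the desired conclusion.

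There is no serious technical obstacle here: the whole argument is a single three-term chain resting only on the definitions in Eq. (\ref{eq2}) and Eq. (\ref{eq3}). The one point deserving mild care is that $h^*$ be well defined, i.e., that the minimum of $G(\cdot,h')$ actually be attained; in the intended application $G$ is a coercive (typically quadratic) surrogate, so the minimizer exists and the step is legitimate. Moreover, the argument is robust: if $h^*$ is only required to satisfy $G(h^*,h') \le G(h',h')$ rather than to be the exact global minimizer, the same chain still goes through unchanged, which is what justifies using any closed-form descent step derived from $G$ in the subsequent multiplicative update rules.
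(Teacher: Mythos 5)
Your proof is correct and follows exactly the same route as the paper: the three-term chain $F(h^*)\le G(h^*,h')\le G(h',h')=F(h')$ using the two defining properties of the auxiliary function and the optimality of $h^*$. Your added remarks about attainment of the minimum and the sufficiency of mere descent ($G(h^*,h')\le G(h',h')$) are sensible refinements but do not change the argument.
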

\begin{proof} The conditions satisfied by the auxiliary function make this proof marked because
	\begin{equation}
	F(h^{*})\leq G(h^{*},h')\leq G(h',h')\leq F(h')
	\label{eq4}
	\end{equation}
\end{proof}

Thus, if the auxiliary function reaches the minimum, then the original function does not increase. Next, we construct the update rule for NMF. 

Consider $S$ first, where $S^t>0$ represents the $t$-th iteration and $H>0$ is given. Let $\xi_{ijl}=S^t_{il}H_{lj}/(S^tH)_{ij}$, of course, $\xi_{ijl}\geq0$ and $\sum_{l=1}^K\xi_{ijl}=1$. Therefore, the auxiliary function of $F_1$ is
\begin{equation}
f_1(S,S^t)=\sum_{i=1}^M\sum_{j=1}^N\sum_{l=1}^K\xi_{ijl}(X_{ij}-\frac{S_{il}H_{lj}}{\xi_{ijl}})^2
\label{eq5}
\end{equation}

The variables $S$ in the objective function are separable to be minimized. We take the partial derivative of Eq. (\ref{eq5}) and set it to zero so that we can obtain the following update rule:
\begin{equation}
S \leftarrow S \otimes (XH^T)\oslash(SHH^T)
\label{eq6}
\end{equation}

This is similar to deriving the update rule to $H$ as
\begin{equation}
H \leftarrow H \otimes (S^TX)\oslash(S^TSH)
\label{eq7}
\end{equation}


\subsection{Proposed method}

To address the importance of the $i$-th feature in $X$, we introduce an optimizable weight $W_i$. Then, Eq. (\ref{eq1}) can be reformulated as
\begin{equation}
\begin{aligned}
\min&~F_2(W,S,H)=\sum_{i=1}^M \sum_{j=1}^N W_{i}[X_{ij}-(SH)_{ij}]^2\\ 
s.~t.&~ S\geq0,~H\geq0,~W\geq0,~\sum_{i=1}^M W_{i}=1
\end{aligned}\label{eq8}
\end{equation}

The new variable $W$ can be solved in an alternative optimization manner. However, for fixed $S$ and $H$, $W_{i}$ is very easy to solve as
\begin{eqnarray}
&&W_i=
\begin{cases}
1& \text{$E_{i}=min\{E_{1}, E_{2},..., E_{M}\}$}\\
	0& \text{else}
\end{cases}\nonumber\\
&&where~~E_i=\sum_{j=1}^N [X_{ij}-(SH)_{ij}]^2\label{eq:minW}
\end{eqnarray}

The solution can be explained by a simple example, which is shown in the Appendix. This process is similar to the computation of the weights in the k-means clustering algorithm \cite{macqueen1967some}. Furthermore, it is indeed incompatible with the real problem. A natural concept is that the weight should fall in the range of $[0~1]$ instead of 0 or 1. As follows, we apply two techniques to solve this issue.

\subsubsection{Fuzzier WNMF (FWNMF)}

First, we introduce a hyperparameter $p>1$ as a given power of the weight $W_i$ so that $W_i$ can be seen as a fuzzier (value in the range of $[0~1]$). We then obtain the following model:
\begin{equation}
\begin{aligned}
\min&~F_3(W, S, H)=\sum_{i=1}^M \sum_{j=1}^N   W_{i}^p\,[ X_{ij}-(SH)_{ij}]^2\\ 
s.~t.&~ S\geq0,~H\geq0,~W\geq0,~\sum_{i=1}^M W_{i}=1
\label{eq9}
\end{aligned}
\end{equation}

We fix $S$ and $H$ and then solve $W$. The Lagrange function of Eq. (\ref{eq9}) can be constructed as
\begin{equation}
\begin{aligned}
L(W,\lambda)=\sum_{i=1}^M \sum_{j=1}^N W_{i}^p\,[X_{ij}-(SH)_{ij}]^2-\lambda\,(\sum_{i=1}^MW_{i}-1)
\label{eq10}
\end{aligned}
\end{equation}
where $\lambda$ is the Lagrange multiplier. By setting the partial derivative to $W_{i}$ and $\lambda$ to be zero, we obtain the following equation system:
\begin{numcases}{}
\frac{\partial L(W,\lambda)}{\partial W_{i}} &=$pW_{i}^{p-1}\sum\limits_{j=1}^N[X_{ij}-(SH)_{ij}]^2-\lambda=0$ \label{eq11}\\
\frac{\partial L(W,\lambda)}{\partial \lambda}& =$\sum_{i=1}^M W_i-1=0$
\label{eq12}
\end{numcases}

From Eq. (\ref{eq11}), we know that:\\
\begin{equation}
W_{i}=\sqrt[\leftroot{-2}\uproot{13}p-1]{\frac{\lambda}{p}} \sqrt[\leftroot{-2}\uproot{17}p-1]{\frac{1}{\sum_{j=1}^N[X_{ij}-(SH)_{ij}]^2}}
\label{eq13}
\end{equation}

Substituting Eq. (\ref{eq13}) into Eq. (\ref{eq12}), we have\\
\begin{equation}
\begin{aligned}
\sum_{i=1}^M W_i&=\sqrt[\leftroot{-2}\uproot{13}p-1]{\frac{\lambda}{p}}~~\sum_{i=1}^M\sqrt[\leftroot{-2}\uproot{17}p-1]{\frac{1}{\sum_{j=1}^N[X_{ij}-(SH)_{ij}]^2}}\\&=1
\label{eq14}
\end{aligned}
\end{equation}

Rearranging Eq. (\ref{eq14}), we have\\
\begin{equation}
\sqrt[\leftroot{-2}\uproot{13}p-1]{\frac{\lambda}{p}}=\frac{1}{\sum\limits_{i=1}^M \sqrt[\leftroot{-2}\uproot{13}p-1]{\frac{1}{\sum_{j=1}^N[X_{ij}-(SH)_{ij}]^2}}}
\label{eq15}
\end{equation}

Substituting this expression into Eq. (\ref{eq13}), we find that\\
\begin{equation}
W_{i}=\frac{\sqrt[\leftroot{-2}\uproot{7}p-1]{\frac{1}{\sum_{j=1}^N[X_{ij}-(SH)_{ij}]^2}}}{\sum\limits_{l=1}^M \sqrt[\leftroot{-2}\uproot{7}p-1]{\frac{1}{\sum_{j=1}^N[X_{lj}-(SH)_{lj}]^2}}} 
\label{eq16}
\end{equation}

Next, we can solve $S$ and $H$ with fixed $W$, which is similar to the standard NMF. For example, we can construct the following auxiliary function about $S$:\\
\begin{equation}
f_2(S,S^t)=\sum_{i=1}^M \sum_{j=1}^N\sum_{l=1}^K W_{i}^p\,\xi_{ijl}(X_{ij}-\frac{S_{il}H_{lj}}{\xi_{ijl}})^2
\label{eq17}
\end{equation}

Setting the partial derivative of $f_2(S,S^t)$ to zero yields the following update rule:\\
\begin{equation}
S \leftarrow S \otimes (XH^T)\oslash(SHH^T)
\label{eq18}
\end{equation}

Similarly, we can easily obtain the update rule for $H$ as follows:\\
\begin{equation}
\begin{aligned}
H \leftarrow H \otimes [(\overline{W^p} S)^TX]\oslash[(\overline{W^p} S)^TSH]
\label{eq19}
\end{aligned}
\end{equation}
where $\overline{W^p}=diag([W_1^p,\cdots,W_M^p])$. 

FWNMF is summarized in $\mathbf{Algorithm~ 1}$:
\begin{algorithm}[htb]
	\caption{ FWNMF}
	\label{alg:Framwork}
	\begin{algorithmic}[1]
		\Require
		Given input nonnegative matrix $X\subseteq R^{M \times N}$, reduced dimension number $K$, and hyperparameter $p$;
		\Ensure
		weight vector $W$, base matrix $S$, and representation matrix $H$;
		\State Randomly initialize $S\in R^{M\times K}> 0$ and $H\in R^{K \times N} > 0$;
		\While{not convergence}
		\label{code:fram:trainbase}
		\State Update $W$ by Eq. (\ref{eq16});
		\label{code:fram:add}
		\State Update $S$ by Eq. (\ref{eq18});
		\label{code:fram:classify}
		\State Update $H$ by Eq. (\ref{eq19});
		\label{code:fram:classify}
		\EndWhile
		\label{code:fram:select} \\
		\Return  $W$, $S$ and $H$.
	\end{algorithmic}
\end{algorithm}

\subsubsection{Entropy regularized WNMF (ERWNMF)}
We introduce the entropy regularizer into Eq. (\ref{eq8}), which represents the uncertainty of the weights. The problem is reformulated as follows:
\begin{equation}
\begin{aligned}
\min ~F_4( W, S,H)=&\sum_{i=1}^M \sum_{j=1}^N W_{i}[X_{ij}-(SH)_{ij}]^2\\
&+\gamma\,\sum_{i=1}^M  W_iln( W_{i})\\ 
s.~t.~S\geq0,~H\geq0,&~W\ge0,~\sum_{i=1}^MW_i=1
\label{eq20}
\end{aligned}
\end{equation}
$\gamma\ge0$ is a hyperparameter that controls the strength of the entropy regularizer. Minimizing it can motivate more dimensions to participate in factorization. 

We first solve $W$ as we just did. Let $\lambda$ still be the Lagrange multiplier, and we still construct a Lagrange function.
\begin{eqnarray}
L(W,\lambda)&=&\sum_{i=1}^M \sum_{j=1}^NW_{i}[X_{ij}-(SH)_{ij}]^2\nonumber\\
&&+\gamma\sum_{i=1}^MW_{i}ln(W_{i})-\lambda\,(\sum_{i=1}^MW_{i}-1)
\label{eq21}
\end{eqnarray}

By setting the gradient of Eq. (\ref{eq21}) with regard to $W_{i}$ and $\lambda$, we obtain the following equation system:

\begin{numcases}{}
	\frac{\partial L(W,\lambda)}{\partial W_{i}}=&$\sum\limits_{j=1}^N[X_{ij}-(SH)_{ij}]^2$\nonumber\\
	&$+\gamma\ln W_{i}+\gamma-\lambda=0$\label{eq22}\\
	\frac{\partial L(W,\lambda)}{\partial \lambda}=&$\sum\limits_{i=1}^MW_{i}-1=0$
	\label{eq23}
\end{numcases}{}

From Eq. (\ref{eq22}), we know that\\
\begin{equation}
\begin{aligned}
W_{i}=e^{\frac{\lambda-\gamma}{\gamma}}e^{-\frac{\sum_{j=1}^N[X_{ij}-\left(SH\right)_{ij}]^2}{\gamma}}
\label{eq24}
\end{aligned}
\end{equation}

Substituting Eq. (\ref{eq24}) into Eq. (\ref{eq23}), we have\\
\begin{equation}
\begin{aligned}
\sum\limits_{i=1}^MW_{i}=e^{\frac{\lambda-\gamma}{\gamma}}\sum\limits_{i=1}^Me^{-\frac{\sum_{j=1}^N[X_{ij}-\left(SH\right)_{ij}]^2}{\gamma}}=1
\label{eq25}
\end{aligned}
\end{equation}

Rearranging Eq. (\ref{eq25}), we have\\
\begin{eqnarray}
e^{\frac{\lambda-\gamma}{\gamma}} =\frac{1}{\sum\limits_{i=1}^M e^{{-\frac{\sum_{j=1}^N[X_{ij}-(SH)_{ij}]^2}{\gamma}}}}  
\label{eq26}
\end{eqnarray}

Substituting this expression into Eq. (\ref{eq24}), we find that\\
\begin{equation}
W_{i}=\frac{e^{-\frac{\sum_{j=1}^N[X_{ij}-(SH)_{ij}]^2}{\gamma}}}{\sum\limits_{l=1}^M e^{-\frac{\sum_{j=1}^N[X_{lj}-(SH)_{lj}]^2}{\gamma}}} 
\label{eq27}
\end{equation}

We can then update $S$ and $H$ in a similar way as above.
\begin{eqnarray}
S &\leftarrow& S \otimes (XH^T)\oslash(SHH^T)
\label{eq28}\\
H &\leftarrow& H \otimes [(\overline{W} \otimes S)^TX]
\oslash[(\overline{W} \otimes S)^TSH]
\label{eq29}
\end{eqnarray}
where $\overline{W}=diag([W_1,\cdots,W_M])$. 

The details of ERWNMF are shown in $\mathbf{Algorithm~ 2}$:
\begin{algorithm}[htb]
	\caption{ ERWNMF}
	\label{alg:Framwork1}
	\begin{algorithmic}[1]
		\Require
		Given the input nonnegative matrix $X\subseteq R^{M \times N}$, number
		$K$ of reduced dimensions, and hyperparameter $\gamma$;
		\Ensure
		weight vector $W$, base matrix $S$, and representation matrix $H$;
		\State Randomly initialize $S\in R^{M \times K}> 0$ and $H\in R^{K \times N} > 0$;
		\While{not convergence}
		\State Update $W$ by Eq. (\ref{eq27});
		\State Update $S$ by Eq. (\ref{eq28});
		\State Update $H$ by Eq. (\ref{eq29});
		\EndWhile\\
		\Return  $W$, $S$ and $H$.
	\end{algorithmic}
\end{algorithm}

The weights $W$ in both FWNMF and ERWNMF denote the weights corresponding to the features in the original data. The similarity lies in that both of them depend on the reconstructed error, $\sum_{j=1}^N[X_{ij}-(SH)_{ij}]^2$, corresponding to each feature. Less error means more contribution to NMF. The difference is that they use different functions to describe this behavior.

\section{Extensions}
These techniques can be easily extended to other NMF methods. We take ConvexNMF \cite{ding2008convex} as an example to construct two new subspace NMF methods. 

First, we extend the FWNMF technique to ConvexNMF and obtain the optimization model as
\begin{equation}
\begin{aligned}
&\min F_5(W,S,H)=\sum_{i=1}^M \sum_{j=1}^N  W_{i}^p\,[ X_{ij}-\left( XSH\right)_{ij}]^2\qquad\qquad\qquad\qquad\\ 
&s.~t.~ S\geq0,~H\geq0,~W\geq0,~\sum_{i=1}^MW_{i}=1
\label{eq30}
\end{aligned}
\end{equation}
The corresponding update rule is as follows:
\begin{eqnarray}
W_{i}&\leftarrow&\frac{\sqrt[\leftroot{-2}\uproot{7}p-1]{\frac{1}{\sum_{j=1}^N[X_{ij}-(XSH)_{ij}]^2}}}{\sum\limits_{l=1}^M \sqrt[\leftroot{-2}\uproot{7}p-1]{\frac{1}{\sum_{j=1}^N[X_{lj}-(XSH)_{lj}]^2}}} 
\label{eq31}\\
\nonumber\\
S &\leftarrow& S\otimes[(\overline{W^p}  X)^TXH^T] \nonumber\\
&&\oslash [(\overline{W^p}X)^T XSHH^T]  \label{eq32}\\
\nonumber\\
H &\leftarrow& H\otimes[(\overline{W^p} XS)^TX]\nonumber\\
  && \oslash [(\overline{W^p} XS)^TXSH]\label{eq33} 
\end{eqnarray}

We also extend the ERWNMF technique to ConvexNMF and obtain the following optimization model:
\begin{equation}
\begin{aligned}
&\min F_6(W,S,H)=\sum_{i=1}^M \sum_{j=1}^N  W_{i}[ X_{ij}-( X S H)_{ij}]^2\\
&~~~~~~~~~~~~~~~~~~~~~~~~~~~~~~~ +\gamma\sum_{i=1}^M W_{i}ln(W_{i})\\  
&s.~t.~ S\geq0,~H\geq0,~W\geq0,~\sum_{i=1}^MW_{i}=1
\label{eq34}
\end{aligned}
\end{equation}
The corresponding update rule is as follows:
\begin{eqnarray}
W_{i}&\leftarrow&\frac{e^{-\frac{\sum_{j=1}^N[X_{ij}-(XSH)_{ij}]^2}{\gamma}}}{\sum\limits_{l=1}^M e^{-\frac{\sum_{j=1}^N[X_{lj}-(XSH)_{lj}]^2}{\gamma}}} 
\label{eq35}\\
\nonumber\\
S &\leftarrow& S\otimes[(\overline{W}  X)^TXH^T] \nonumber\\
&&\oslash [(\overline{W}X)^T XSHH^T]  \label{eq32}\\
\nonumber\\
H &\leftarrow& H\otimes[(\overline{W} XS)^TX]\nonumber\\
&& \oslash [(\overline{W} XS)^TXSH]\label{eq37} 
\end{eqnarray}

Their detailed derivation is omitted because they are very similar to the above methods.
Overall, our technique can be well popularized to many existing methods.
\begin{figure}[H]
	\centering
	\subfigure[]{
		\includegraphics[width=0.45\textwidth]{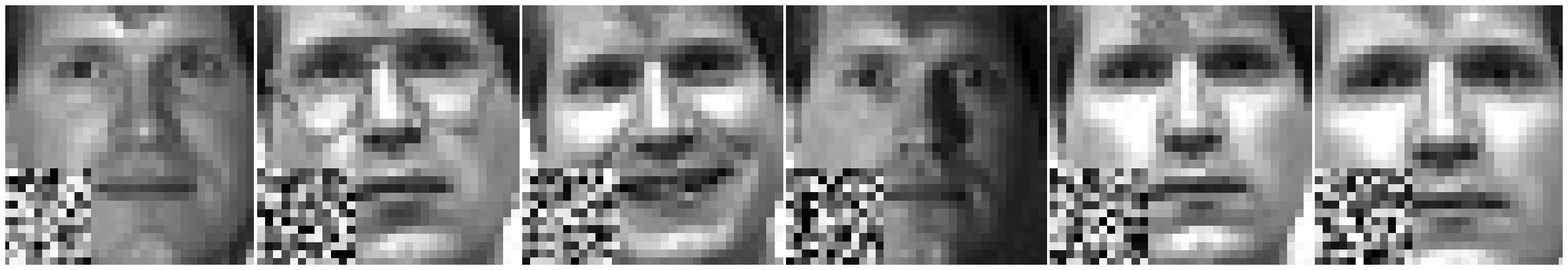} \label{1}
	}
	\quad
	\subfigure[]{
		\includegraphics[width=0.45\textwidth]{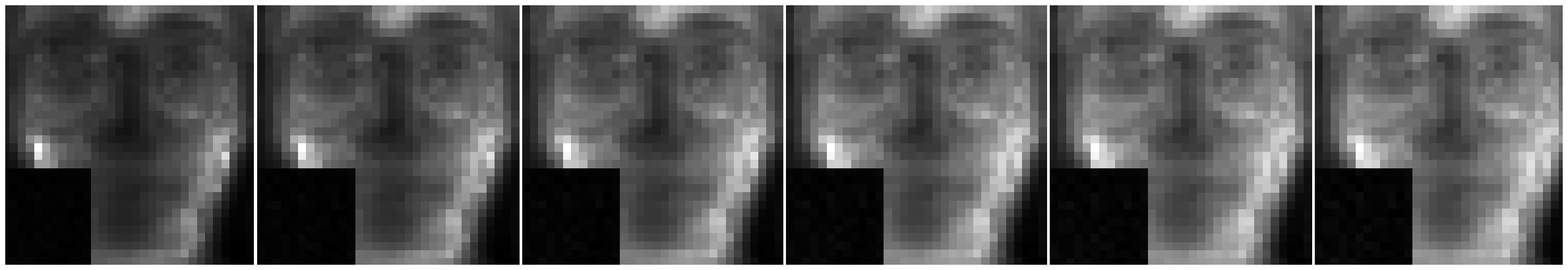} \label{2}
	}
	\quad
	\subfigure[]{
		\includegraphics[width=0.45\textwidth]{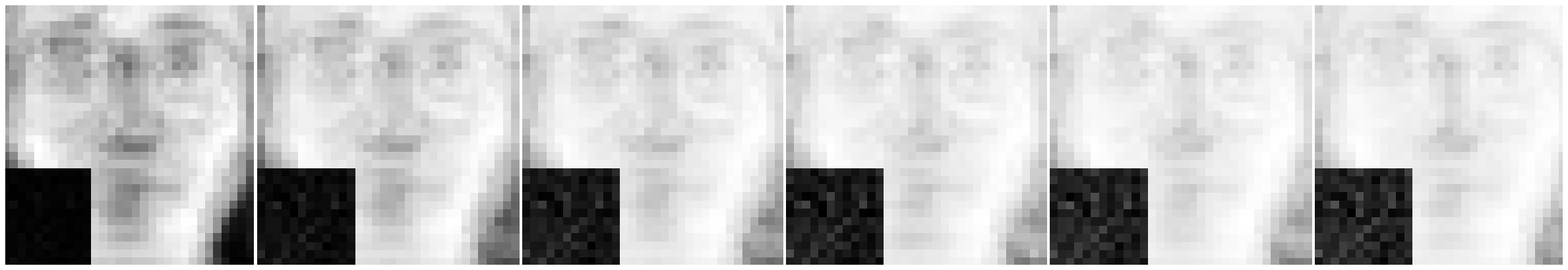}\label{3}
	}
	
	\caption{Weights $W$ obtained by FWNMF and ERWNMF: (a) six corrupted images of Yale dataset, (b) weight image obtained by FWNMF with $p$ ranging in $\{4, 4.5, 5, 5.5, 6, 6.5 \}$, and (c) weight image obtained by ERWNMF with $\gamma$ are chosen in order of $\{2^2,2^3,2^4,2^5,2^6,2^7\}$.}
	\label{fig.1}

\end{figure}

\section{Experiments}
\subsection{Experimental description}
\subsubsection{Setting and datasets}
To illustrate the effectiveness of our method in clustering tasks, we compare it with six existing methods: NMF\cite{lee1999learning}, ONMF\cite{choi2008algorithms}, NBVD\cite{long2005co}, NMTF\cite{yoo2010orthogonal}, SemiNMF\cite{ding2008convex}, and ConvexNMF\cite{ding2008convex}.
Experiments were performed on an HP Compaq PC with a 3.40-GHz Core i7-6700 CPU and 16 GB of memory, and all methods were implemented in MATLAB. 
All methods are initialized with the same uniform distribution for $W$ and $H$ over [0.1 1.1], and 300 iterations are performed to ensure sufficient convergence. We evaluate the proposed methods using real-world datasets, the details of which are listed in Table \ref{tab1}. We normalize each sample in the dataset in the range [0~1].

\begin{table}[!t] \caption{Description of datasets.}
	\label{tab1}	\centering
	\begin{tabular}{lcccccc}  
		\toprule   
		Dataset & Samples & Dimensions & Classes \\	
		\midrule   
		Yale\cite{belhumeur1997eigenfaces}&165&$32\times32$&15   \\  
		ORL\cite{kuang2015symnmf}&400&$32\times32$&40\\
		UMISTface\tablefootnote{http://www.sheffield.ac.uk/eee/research/iel/research/face/}&1012&$32\times32$&20\\
		GRIMACE\tablefootnote{https://cmp.felk.cvut.cz/~spacelib/faces/grimace.html}&360&$32\times32$&18\\
		COIL20\tablefootnote{https://www.cs.columbia.edu/CAVE/software/softlib/coil-20.php}&1440&$32\times32$&20\\
		GTFD\cite{chen2005face}&750&$32\times32$&50\\
		AR\tablefootnote{http://www2.ece.ohio-state.edu/~aleix/ARdatabase.html}&1400&$32\times32$&100\\
		Iris\tablefootnote{https://archive.ics.uci.edu/ml/datasets/Iris.}&300&4&3\\
		Brightdata\tablefootnote{http://pages.cs.wisc.edu/~olvi/}&2462&14&2\\
		Dimdata\footnotemark[6]&4192&14&2\\
		Satimage\tablefootnote{https://sci2s.ugr.es/keel/index.php}&6435&36&7\\
		Movement\footnotemark[7]&360&90&10\\
		\bottomrule  
	\end{tabular}
\end{table}

\begin{table*}[htbp]
	\renewcommand\arraystretch{1.5}
	\begin{center}
		\caption{Accuracy on real-world datasets. Best results are in boldface.}
		\begin{tabular}{lccccccccc} 
			\toprule 
			\multirow{2}{*}{Dataset}&\multicolumn{8}{c}{Accuracy} \\
			\cline{2-9}
			&NMF & ONMF & NBVD & NMTF & SemiNMF & ConvexNMF & FWNMF & ERWNMF \\
			\midrule
			Yale &0.3730& 0.3467& 0.3415& 0.3036& 0.3958&0.3152&0.3845& \textbf{0.4003} \\
			ORL&0.6178&0.5101& 0.4435& 0.4409& 0.5826& 0.2658&0.6233&\textbf{0.6325}\\
			UMISTface&0.4960& 0.4375& 0.4207& 0.3754& 0.4985& 0.2141&0.4910&\textbf{0.5045}\\ 
			GRIMACE&0.7406& 0.6622& 0.7232& 0.6651& 0.7154& 0.6331&0.7336&\textbf{0.7553}\\
			COIL20&0.5870& 0.5397& 0.5745& 0.5498& 0.5876& 0.5018&0.3598& \textbf{0.6017}\\ 
			GTFD&0.601 &0.5271& 0.5487& 0.5871& 0.5856& 0.4689&0.5984&\textbf{0.6055}\\
			AR&0.3166& 0.2401& 0.1606& 0.1672& 0.3178& 0.1383&0.2752&\textbf{0.3200}\\
			Iris&0.6957& 0.6972& 0.7060&0.6877& 0.7537& 0.6540&0.7417& \textbf{0.7672}\\
			Brightdata&0.7536&0.7340& 0.7319& 0.7010& 0.7427& 0.6720&0.6836&\textbf{ 0.8462}\\
			Dimdata& 0.5188&0.5485& 0.5240& 0.5931& 0.5114& 0.5355&0.7738&\textbf{0.8115}\\
			Satimage&0.7048& 0.5301& 0.5941& 0.5744& 0.5748& 0.5674&0.7148&\textbf{ 0.7212}\\
			Movement& 0.4531& 0.4119& 0.4188& 0.3638& 0.4036& 0.3306&0.458 &\textbf{ 0.4682}\\
			\specialrule{0em}{1pt}{1pt}
			\bottomrule  	
		\end{tabular} \label{tab2}
	\end{center}
\end{table*}

\begin{table*}[htbp]
	\renewcommand\arraystretch{1.5}
	\begin{center}
		\caption{NMI on real-world datasets. Best results are in boldface.}
		\begin{tabular}{lccccccccc} 
			\toprule 
			\multirow{2}{*}{Dataset} &\multicolumn{8}{c}{NMI} \\
			\cline{2-9}
			&NMF & ONMF & NBVD & NMTF & SemiNMF & ConvexNMF & FWNMF & ERWNMF \\
			\hline
			Yale &0.4363& 0.4176& 0.4147& 0.3770&\textbf{0.4633}& 0.3751& 0.4537& 0.4541 \\
			ORL &0.8166 &0.7429 &0.6834 &0.6754 &0.7838 &0.5134 &0.8191 &\textbf{0.8226}\\
			UMISTface&0.677 & 0.6180& 0.5865& 0.5202& 0.6719& 0.2645& 0.6636&\textbf{0.6806}\\ 
			GRIMACE&0.9120& 0.8582& 0.8969&0.8600&0.9030&0.8020& 0.9095& \textbf{0.9146}\\
			COIL20&0.7297& 0.7012& 0.7007& 0.6964&0.7283& 0.6307& 0.5009& \textbf{0.7377}\\
			GTFD&0.8437&0.7839& 0.8004& 0.8252&0.8374& 0.7242& \textbf{0.8494}&0.8474\\
			AR&0.6182& 0.5597& 0.4825& 0.4895&0.5821& 0.4541& 0.5965&\textbf{0.6210}\\
			Iris&0.6434& 0.5983& 0.5696& 0.6234&0.6265& 0.6129& 0.6180& \textbf{0.6649}\\
			Brightdata&0.1717& 0.1466& 0.1489& 0.1569&0.1540& 0.0905& 0.1279&\textbf{ 0.3862}\\
			Dimdata&0.0010& 0.0426& 0.0038& 0.0790&0.0004& 0.0060&0.3086&\textbf{ 0.3756}\\
			Satimage&0.6009& 0.4574&0.5236& 0.5044&0.5075& 0.4896&\textbf{ 0.6075}&0.6061\\
			Movement&0.581 & 0.5399&0.5343& 0.5065&0.4840& 0.4294& 0.5710& \textbf{0.5946}\\
			\specialrule{0em}{1pt}{1pt}
			\bottomrule  	
		\end{tabular} \label{tab3}
	\end{center}
\end{table*}

\subsubsection{Evaluation}
We used clustering accuracy \cite{liu2011constrained,plummer1986matching} and normalized mutual information (NMI)\cite{cai2005document,shahnaz2006document} to evaluate the clustering performance of all methods. A larger value indicates a better clustering effect.
 
 Given a set of actual labels $y$ and clustering results $y'$, the accuracy is defined as\\
\begin{equation}
	Accuracy=\frac{\sum_{i=1}^N\delta\bigl(y_i,map(y'_i)\bigr)}{N}
	\label{eq38}
\end{equation}
where:\\
$$\delta(x,y)=
\begin{cases}
	1& \text{\emph{x = y}}\\
	0& \text{otherwise}
\end{cases}$$
$N$ denotes the total number of samples in the dataset, and $map(\cdot)$ is a permutation mapping function that maps the obtained cluster labels to the actual labels.

NMI is defined as\\
\begin{equation}
NMI(y,y')=\frac{MI(y,y')}{max\bigl(H(y),H(y')\bigr)}
\label{eq39}
\end{equation}
where $H(y)$ is the entropy of $y$. $MI(y,y')$ quantifies the amount of information between $y$ and $y'$ and is defined as

\begin{equation}
MI(y,y')=\sum_{y_i\in y,y'_j\in y'}p(y_i,y'_j)\,log\bigl(\frac{p(y_i,y'_j)}{p(y_i)p(y'_j)}\bigr)
\label{eq40}
\end{equation}
where $p(y_i)$ and $p(y'_j)$ are the probabilities that the data points belong to clusters $y_i$ and $y'_j$, respectively; and $p(y_i,y'_j)$ is the joint probability that an arbitrarily selected data point belongs to clusters $y_i$ and $y'_j$ concurrently. 

\begin{figure*}[htbp]
	\centering
	\includegraphics[scale=0.45]{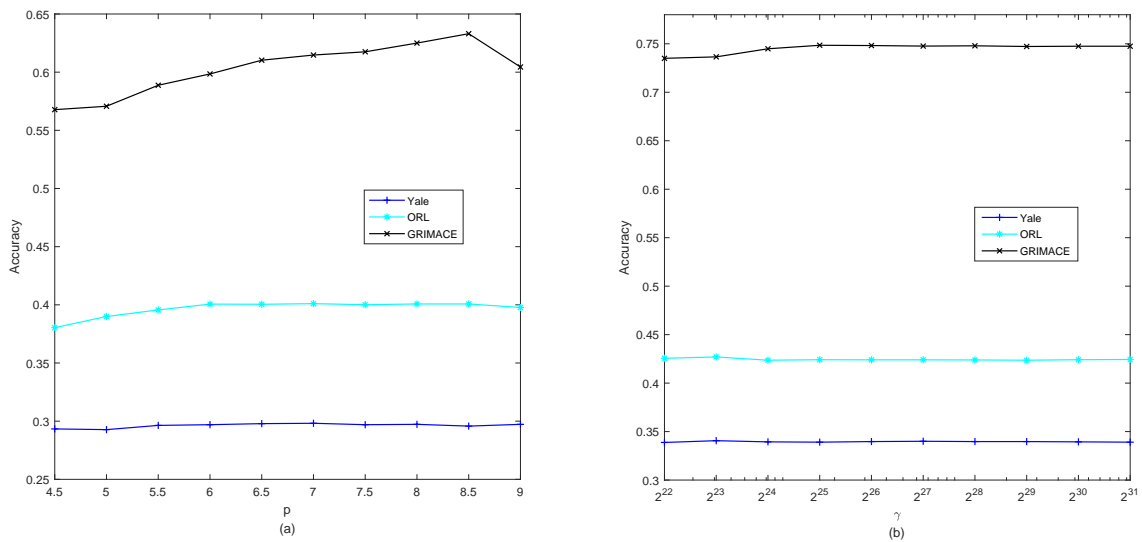}
	\caption{Accuracy with different parameters on three datasets: (a) $p$ for FWNMF and (b) $\gamma$ for ERWNMF. }
	\label{fig.2}
\end{figure*}

\begin{figure*}[htbp]
	\centering
	\includegraphics[scale=0.45]{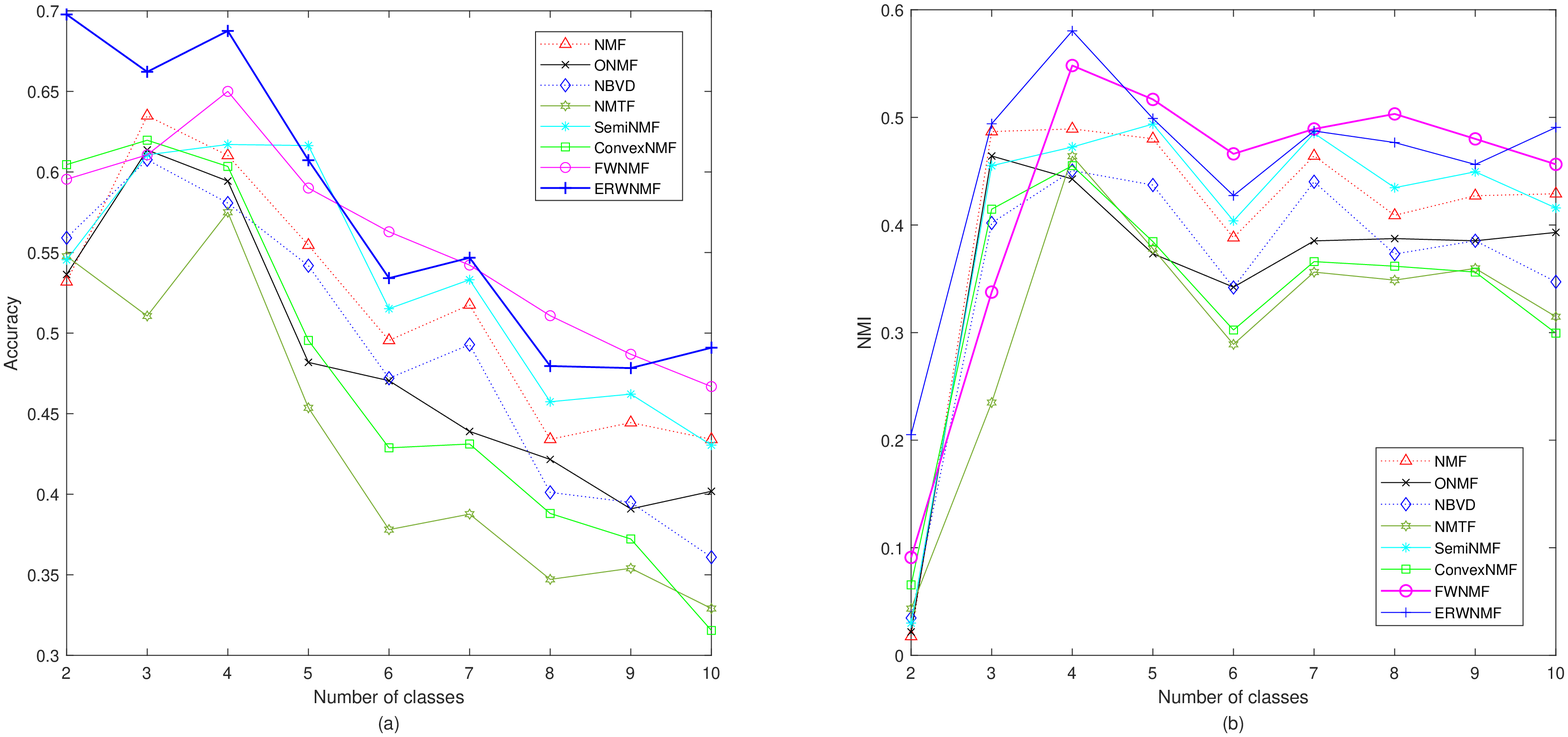}
	\caption{Clustering performance versus cluster number on Yale dataset: (a) accuracy and (b) NMI.}
	\label{fig.3} 
\end{figure*}

\begin{figure*}[htbp]
	\centering
	\includegraphics[scale=0.45]{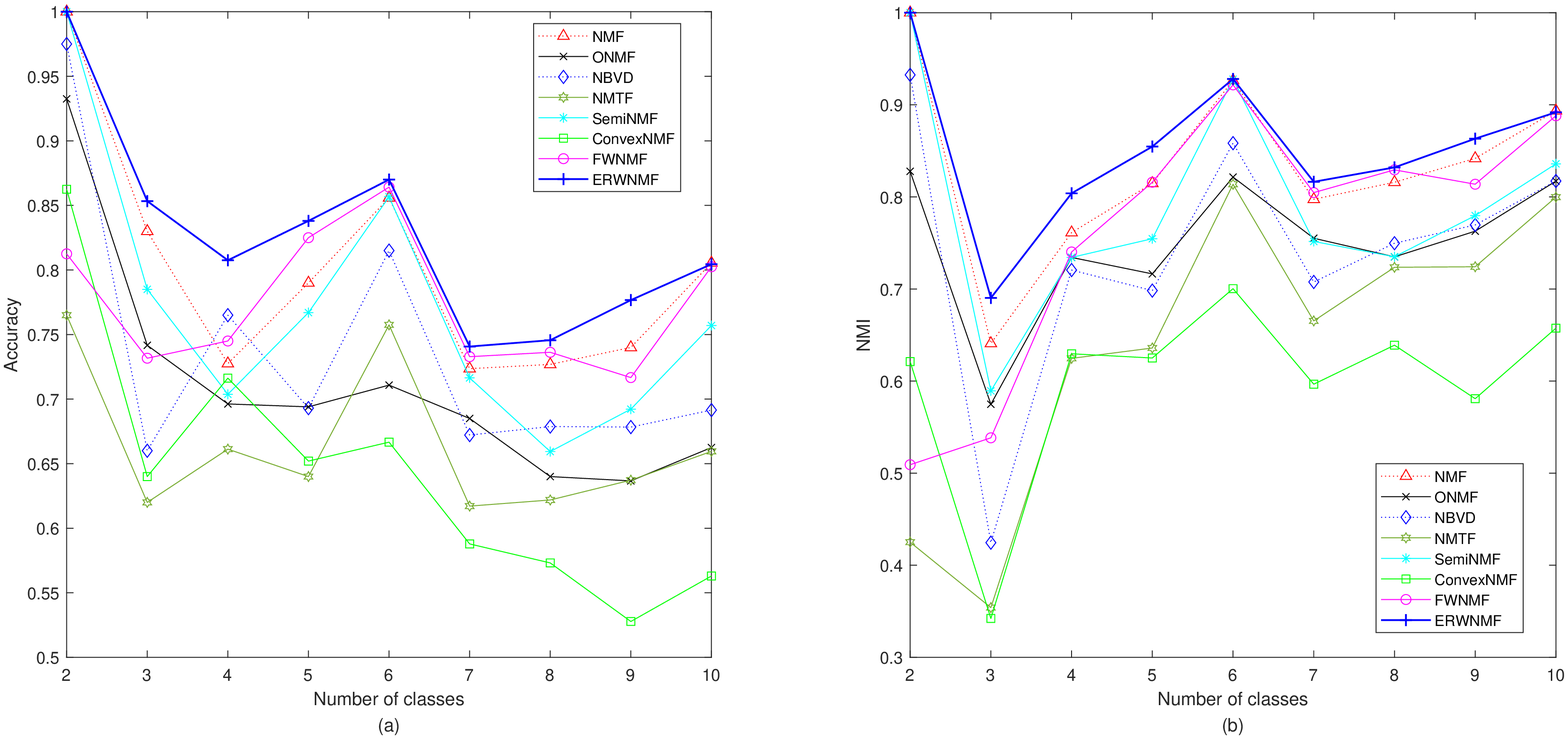}
	\caption{Clustering performance versus cluster number on ORL dataset: (a) accuracy and (b) NMI.}
	\label{fig.4}
\end{figure*}

\begin{figure*}[htbp]
	\centering
	\includegraphics[scale=0.45]{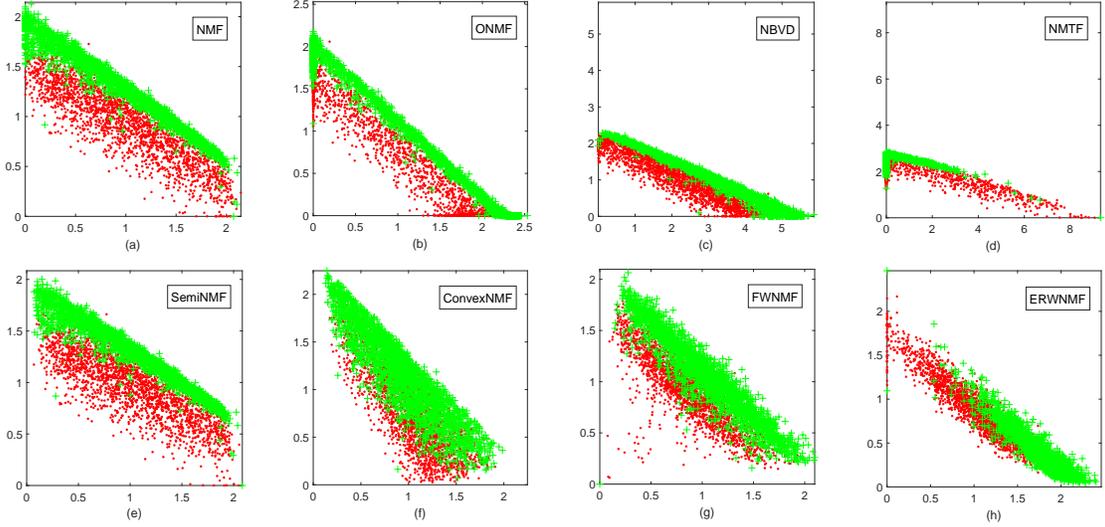}
	\caption{Visualization of different methods on Dimdata dataset: (a) NMF, (b) ONMF, (c) NBVD, (d) NMTF, (e) SemiNMF, (f) ConvexNMF, (g) FWNMF and (h) ERWNMF, where plus sign ($+$) and dot ($\cdot$) represent two categories, respectively.}
	\label{fig.5}
\end{figure*}

\begin{figure*}[htbp]
	\centering
	\includegraphics[scale=0.45]{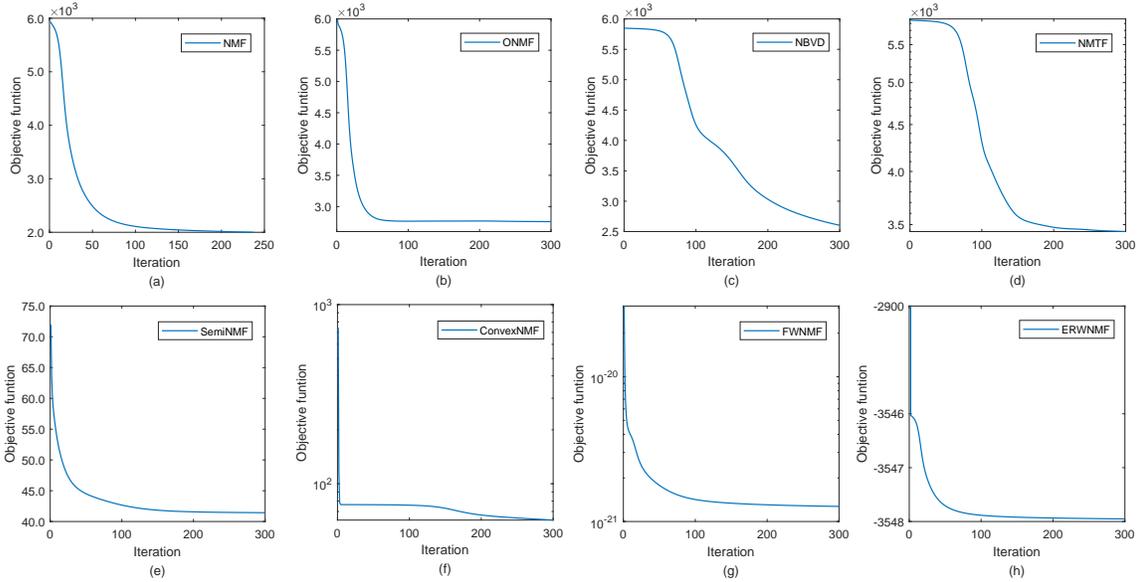}
	\caption{Objective functions on the Yale dataset are obtained by (a) NMF, (b) ONMF, (c) NBVD, (d) NMTF, (e) SemiNMF, (f) ConvexNMF, (g) FWNMF, and (h) ERWNMF.}
	\label{fig.6}
\end{figure*}

\begin{figure*}[htbp]
	\centering
	\includegraphics[scale=0.45]{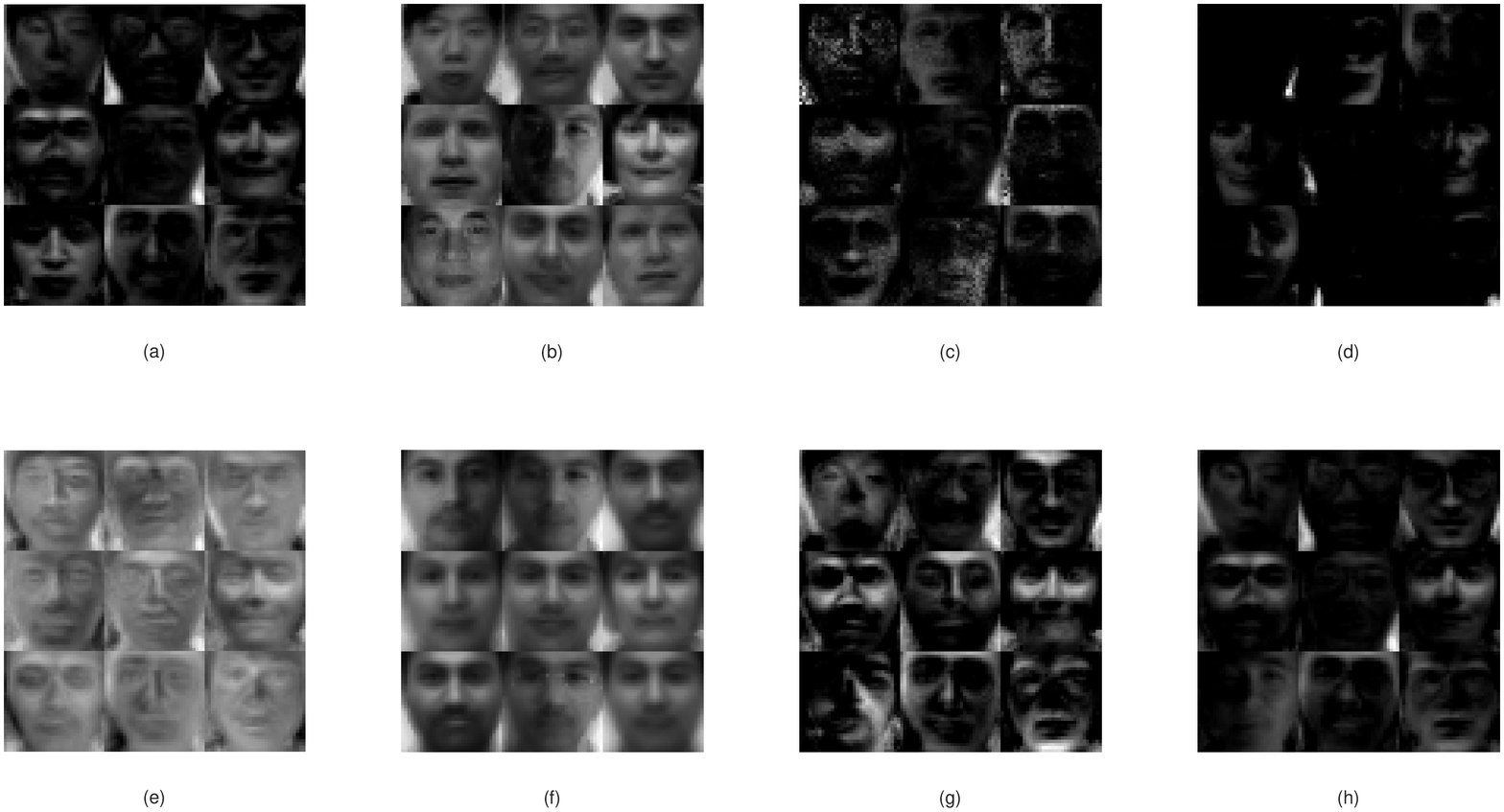}
	\caption{Some base images with respect to Yale dataset, in which summation of every column is normalized to 1.
    They are obtained by (a) NMF, (b) ONMF, (c) NBVD, (d) NMTF, (e) SemiNMF, (f) ConvexNMF, (g) FWNMF, and (h) ERWNMF.}
	\label{fig.7}
\end{figure*}
\subsection{Experimental results}
\subsubsection{Interpretation to weights}
We utilize the following experiment to verify the efficiency of our methods. To simulate a camera with some sensors destroyed, we replaced the area of $12\times 12$ on the image in the Yale dataset with a random number uniformly distributed on [0 1]. We visualized the first six corrupted images of the dataset and reshaped $W$ obtained by Eqs. (\ref{eq16}) and (\ref{eq27}) in Fig. \ref{fig.1}. For FWNMF and ERWNMF, the parameters $p$ and $\gamma$ are chosen in the order of $\{4, 4.5, 5, 5.5, 6, 6.5 \}$ and $\{2^2,2^3,2^4,2^5,2^6,2^7\}$, respectively. As shown, for the corrupted pixels, both of the proposed methods with different hyperparameters can obtain right weights, which approach zero. However, for the other pixels, different hyperparameters lead to different results, whose effects are further investigated as follows. 

\subsubsection{Hyperparameter robustness} 
We discuss the hyperparameter robustness of FWNMF and ERWNMF on three datasets: Yale, ORL, and GRIMACE. We repeat the representation learning and clustering procedures 20 times to avoid the effect of initialization because NMF has no sole solution. Fig. \ref{fig.2} shows the changes in clustering accuracy when the hyperparameters of FWNMF and ERWNMF are changed, where the parameters $p$ of FWNMF and $\gamma$ of ERWNMF are chosen in $\{0.5*i,\,i=9,10,...,17,18\}$ and $\{2^i,\,i=22,23,...,31\}$, respectively. It can be seen that on the Yale and ORL datasets, the parameter $p$ performs well from 6 to 9 with less fluctuation in performance from 4.5 to 6, while $\gamma$ maintains a relatively stable state, indicating that accuracy is not sensitive to $\gamma$ on this dataset. On the GRIMACE dataset, the parameters $p$ and $\gamma$ have a slight fluctuation, showing a growing state overall. We can draw the conclusion that our methods are stable over a large range of hyperparameter values.

\subsubsection{Clustering results} 
We study the relationship between clustering performance and reduced dimensionality on the Yale and ORL datasets. First, we randomly select $k$ categories from the dataset as a subset. We apply the methods to decompose this subset to obtain a new data representation. We set the reduced dimensionality to be the same as the number of clusters. Then, k-means is applied to the new representation for clustering. We compare the obtained clustering results with the labels and compute the accuracy and NMI. For all the methods, the hyperparameters are set to the values at which each method can achieve its best results. For FWNMF, the parameter $p$ is selected in $\{0.5*i,\,i=3,4,...,59,60\}$. For ERWNMF, the parameter $\gamma$ is selected in$\{2^i,\,i=1,2,...,31\}$. We still repeat the above experiments 20 times for a credible comparison. Figs. \ref{fig.3} and \ref{fig.4} show the accuracy and NMI versus the number of clusters. 
On the Yale dataset, one of the two proposed methods generally obtains the best evaluation; however, none of the two is always the best. The standard NMF and SemiNMF are slightly lower, while the results for ONMF, NBVD, and NMTF are very similar. The results of ConvexNMF are generally the worst. On the ORL dataset, ERWNMF has the best performance, and FWNMF is the second-best in most cases.

Tables \ref{tab2} and \ref{tab3} describe the clustering results on the whole datasets, where the number of clusters is fixed in Table \ref{tab1}. We still repeat the process 20 times to obtain an averaged result. The best result is almost always obtained by FWNMF and ERWNMF for all datasets, with only one exception: SemiNMF obtains the best NMI on the Yale dataset, while FWNMF and ERWNMF rank second and third, respectively. On the whole, ERWNMF performs better than FWNMF, especially for accuracy, in which the former is always the best; and even for NMI, ERWNMF also generally obtains better results than FWNMF.
\subsubsection{Nonimage experiments}
We explore the application of FWNMF and ERWNMF on nonimage datasets. The experiment was performed on the Dimdata dataset, containing 4192 samples with 14 features. We compress the original data in two-dimensional space since it only includes two clusters, and the new representation is visualized in Fig. \ref{fig.5}. Viewed with the naked eye, it is easy to observe that the new representations obtained by ERWNMF have good separability if considering using a center-distance-based clustering method such as k-means. This explains why ERWNMF obtains an excellent result in Tables \ref{tab2} and \ref{tab3} regardless of accuracy and NMI.

\subsubsection{Convergence study}
Through theoretical analysis, we found that FWNMF and ERWNMF are monotonically decreasing, but due to the nonconvexity of their objective functions (similar to other NMF methods), it is not guaranteed to be strictly convergent \cite{lin2007convergence}. Therefore, we study their convergence speed. We conducted experiments on the Yale dataset and plotted the objective function in Fig. \ref{fig.6}. We can see that both FWNMF and ERWNMF converge very fast. In particular, ERWNMF can achieve fast convergence with fewer iterations, showing its superior performance. 

We also show the base images obtained by all the methods on the Yale dataset in Fig. \ref{fig.7}. It can be seen that NMTF obtains the sparsest base images; ERWNMF is the second sparsest; ONMF, SemiNMF, and ConvexNMF identify more global faces; and the other methods present similar sparse base images that are difficult to distinguish from each other.

\section{Conclusion}
This paper proposed two new methods to obtain feature representations on the subspace of the original data by introducing two types of adaptive weights for the features. These adaptive weights are interpretable and computable and can be popularized in many existing NMF methods. Our proposed method overcomes the limitations of previous methods and extracts the key features of the original data. We conducted experiments on several datasets to confirm their superiority.
However, both of the proposed methods require an additional hyperparameter to control the certainty of the weights. We plan to develop an auto-adjustment strategy for this hyperparameter in the future.

\section*{Appendix}
This is an example to explain Eq. (\ref{eq:minW}). 
\begin{equation}
	\begin{aligned}
		\min~\{5, 1, 3\}= min&~(5W_1+1W_2+3W_3)\\ 
		s.~t.&~ W_1\geq0, W_2\geq0, W_3\geq0\\
		&~W_1+W_2+W_3=1\nonumber
	\end{aligned}
\end{equation}
It is very easy to verify that the solution is that $W_1=0$, $W_2=1$, and $W_3=0$, in which $W_2=1$ corresponds to the minimum value of \{5, 1, 3\}.
\section*{Acknowledgment}
\indent The authors would like to thank the editor and anonymous reviewers for their constructive comments and suggestions.

\bibliographystyle{IEEEtran}
\bibliography{IEEEabrv,cankaowen}

\begin{IEEEbiography}[{\includegraphics[width=1in, height=1.25in, clip,keepaspectratio]{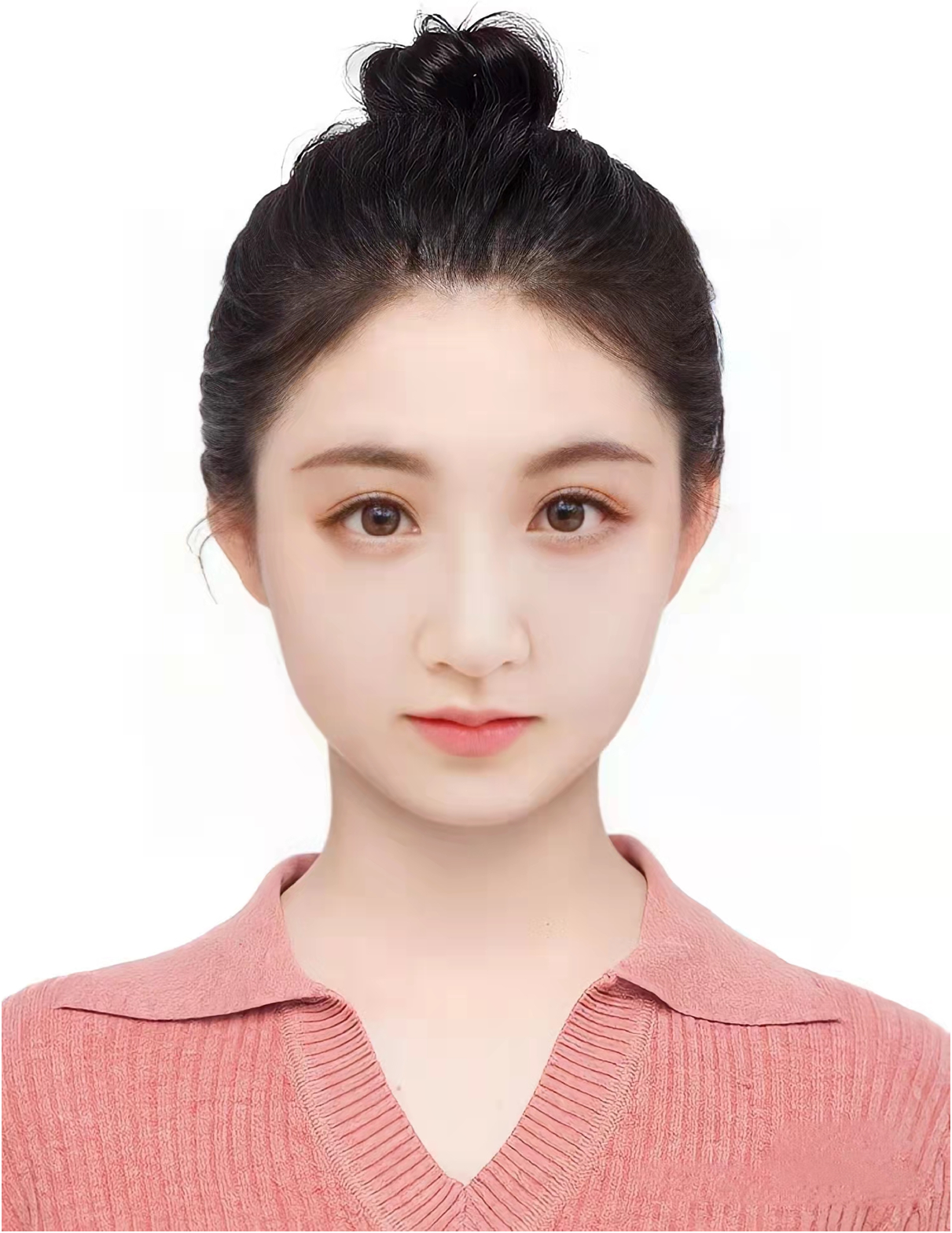}}]{Junhang Li} is pursuing her master's degree in the School of Medicine and Biological Engineering at Northeastern University and received her bachelor's degree from Shenyang Ligong University in 2017.
Her research interests include machine learning and nonnegative matrix factorization.
\end{IEEEbiography}
\begin{IEEEbiography}[{\includegraphics[width=1in, height=1.25in, clip,keepaspectratio]{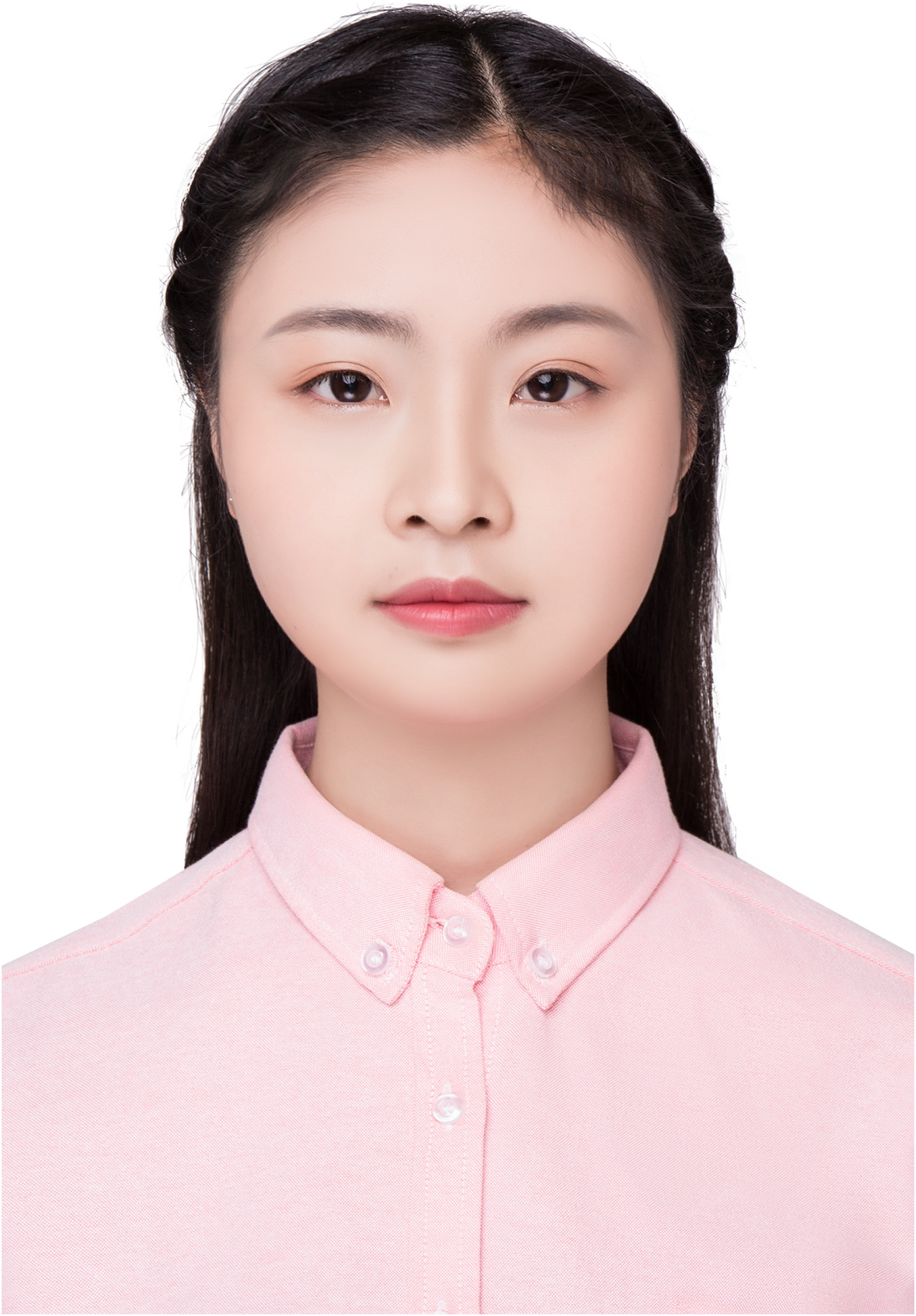}}]{Jiao Wei} is pursuing her M.S. degree in the School of Medical and Bioinformatics Engineering at Northeastern University and received her B.S. degree from Qufu Normal University in 2014. Her research interests include machine learning and nonnegative matrix decomposition algorithms.
\end{IEEEbiography}
\begin{IEEEbiography}[{\includegraphics[width=1in, height=1.25in, clip,keepaspectratio]{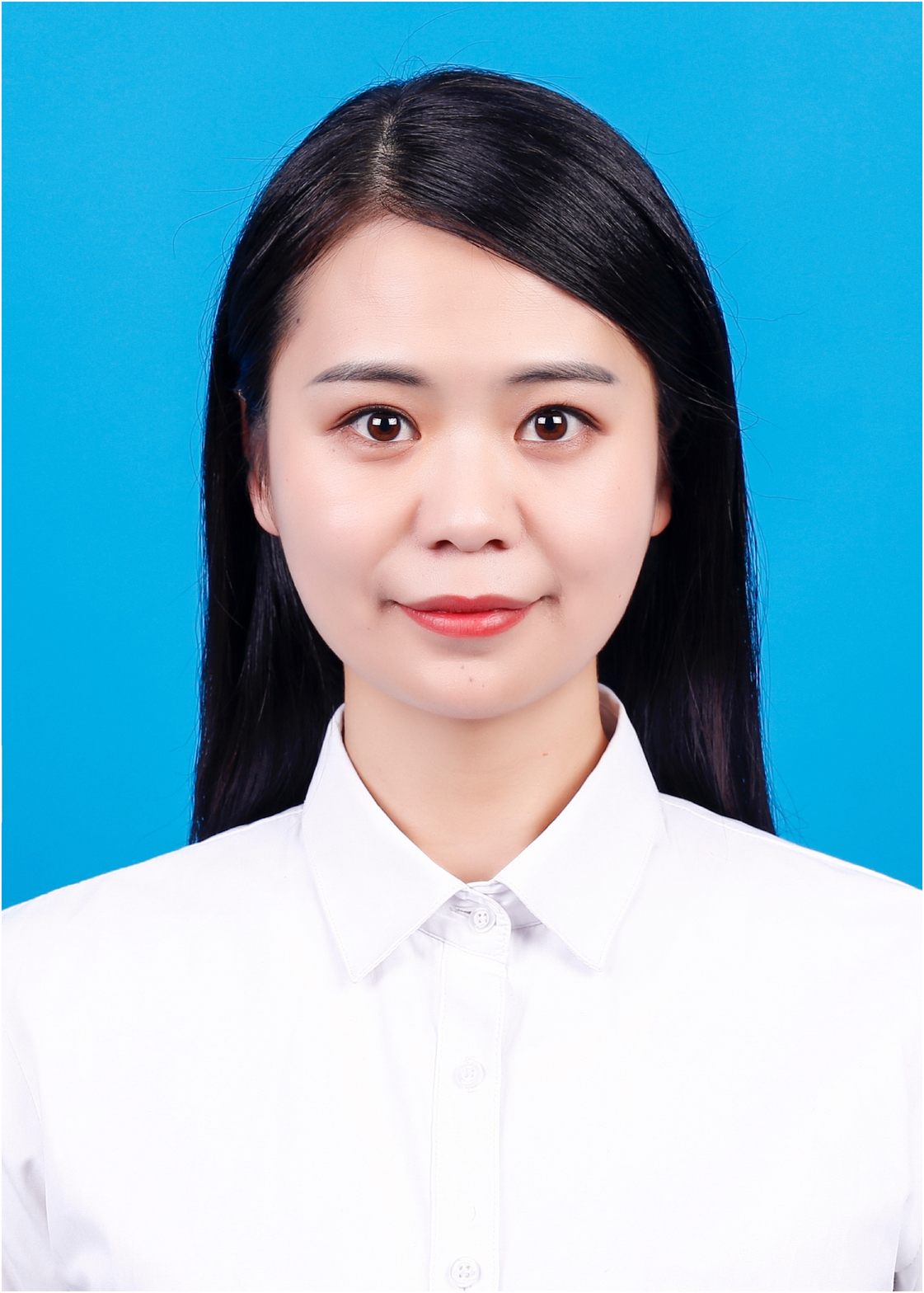}}]{Can Tong} is a Ph.D. student and received a bachelor's degree in mathematics and applied mathematics from 
Northeastern University 
in 2014 and a master's degree in computational mathematics from Northeastern University in 2018. Her primary research interests include basic algorithm theory and acceleration methods of machine learning.
\end{IEEEbiography}

\begin{IEEEbiography}[{\includegraphics[width=1in,height=1.25in,clip,keepaspectratio]{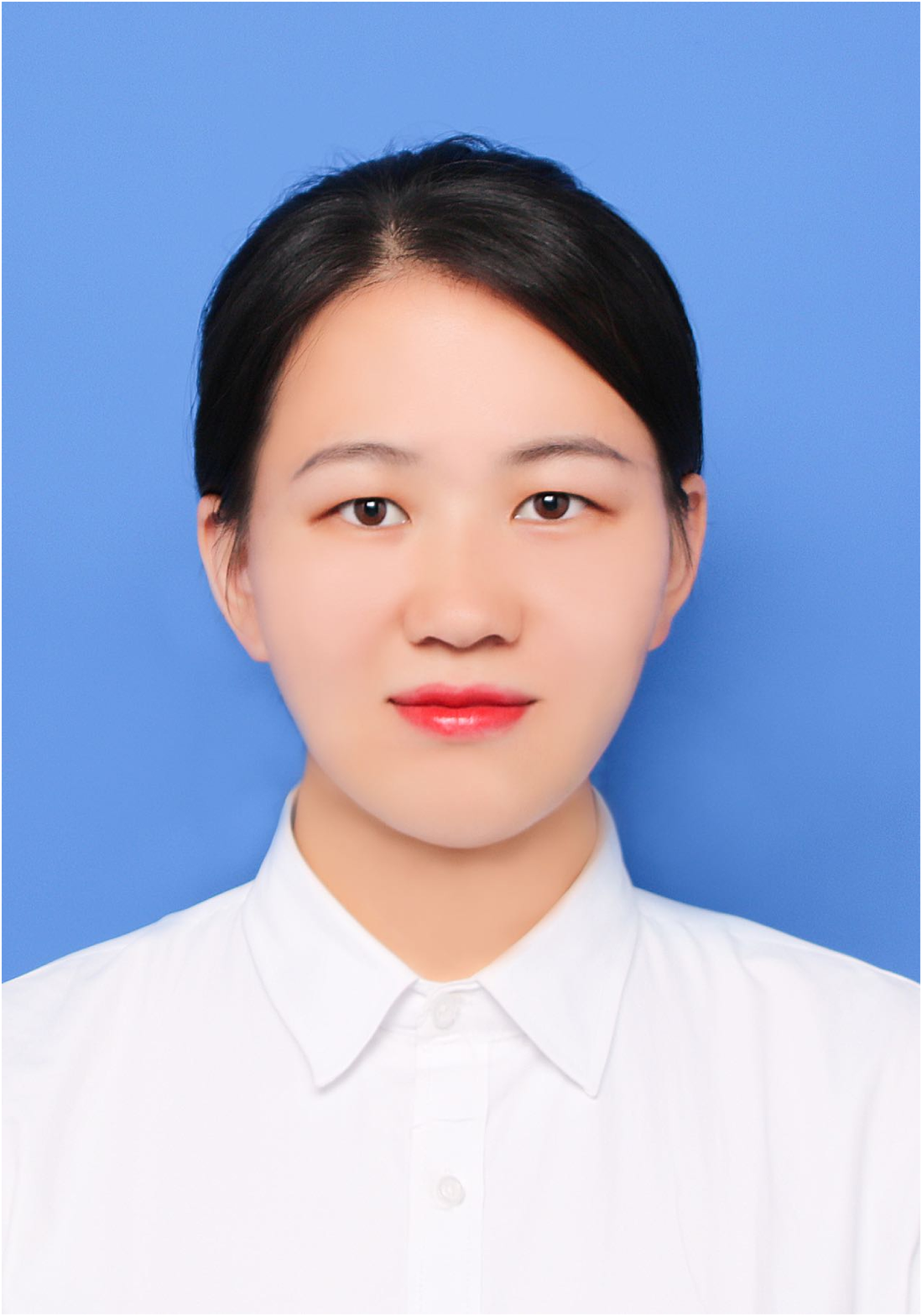}}]{Tingting Shen} is pursuing her M.S. degree in the School of Medical and Bioinformatics Engineering at Northeastern University and received her B.S. degree from Hefei University in 2016. Her research interests are machine learning and deep learning.
\end{IEEEbiography}

\begin{IEEEbiography}[{\includegraphics[width=1in, height=1.25in, clip,keepaspectratio]{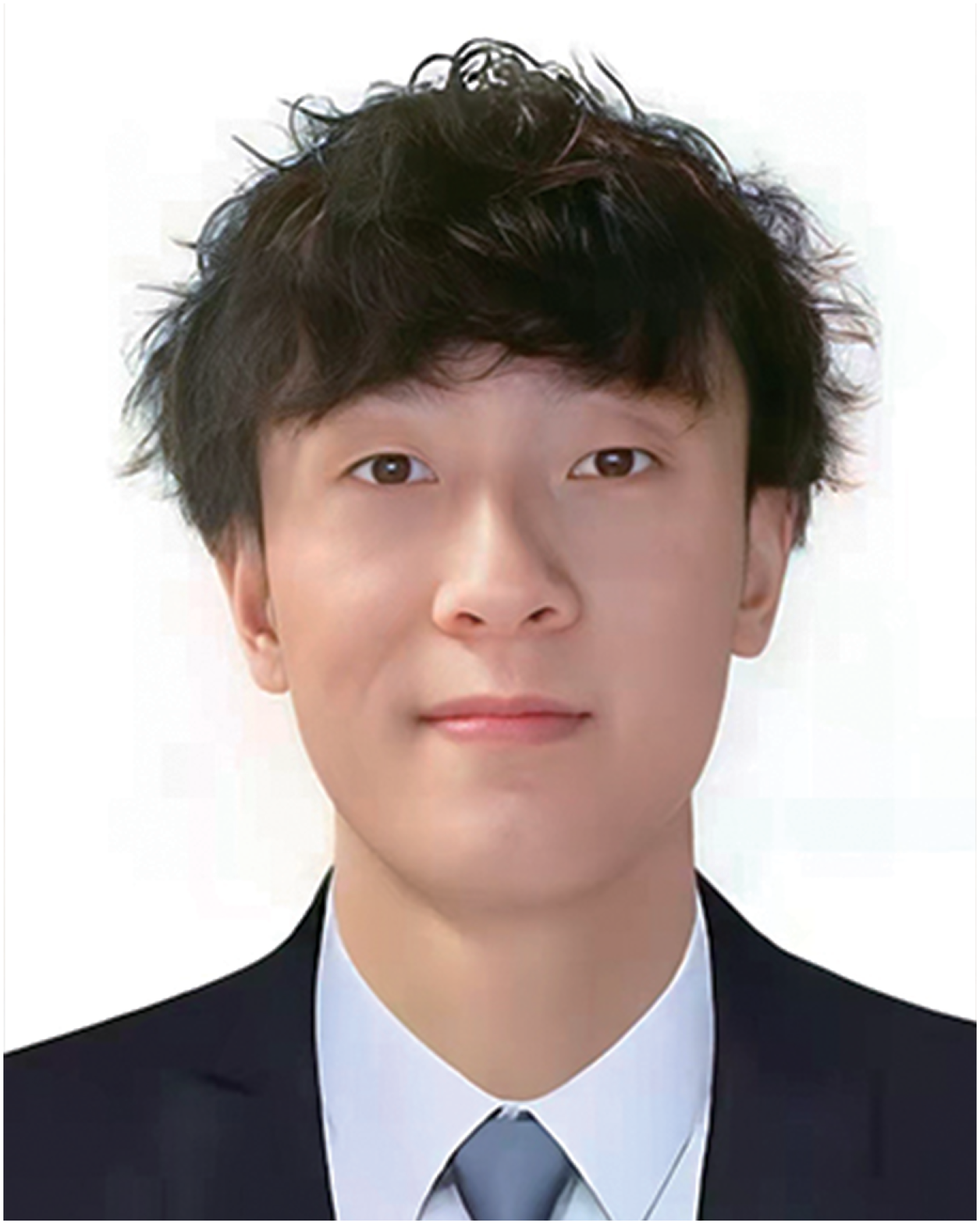}}]{Yuchen Liu} is pursuing his bachelor's degree in the College of Medicine and Biological Information Engineering at Northeastern University. His research interests include machine learning and ensemble learning.
\end{IEEEbiography}

\begin{IEEEbiography}[{\includegraphics[width=1in,height=1.25in,clip,keepaspectratio]{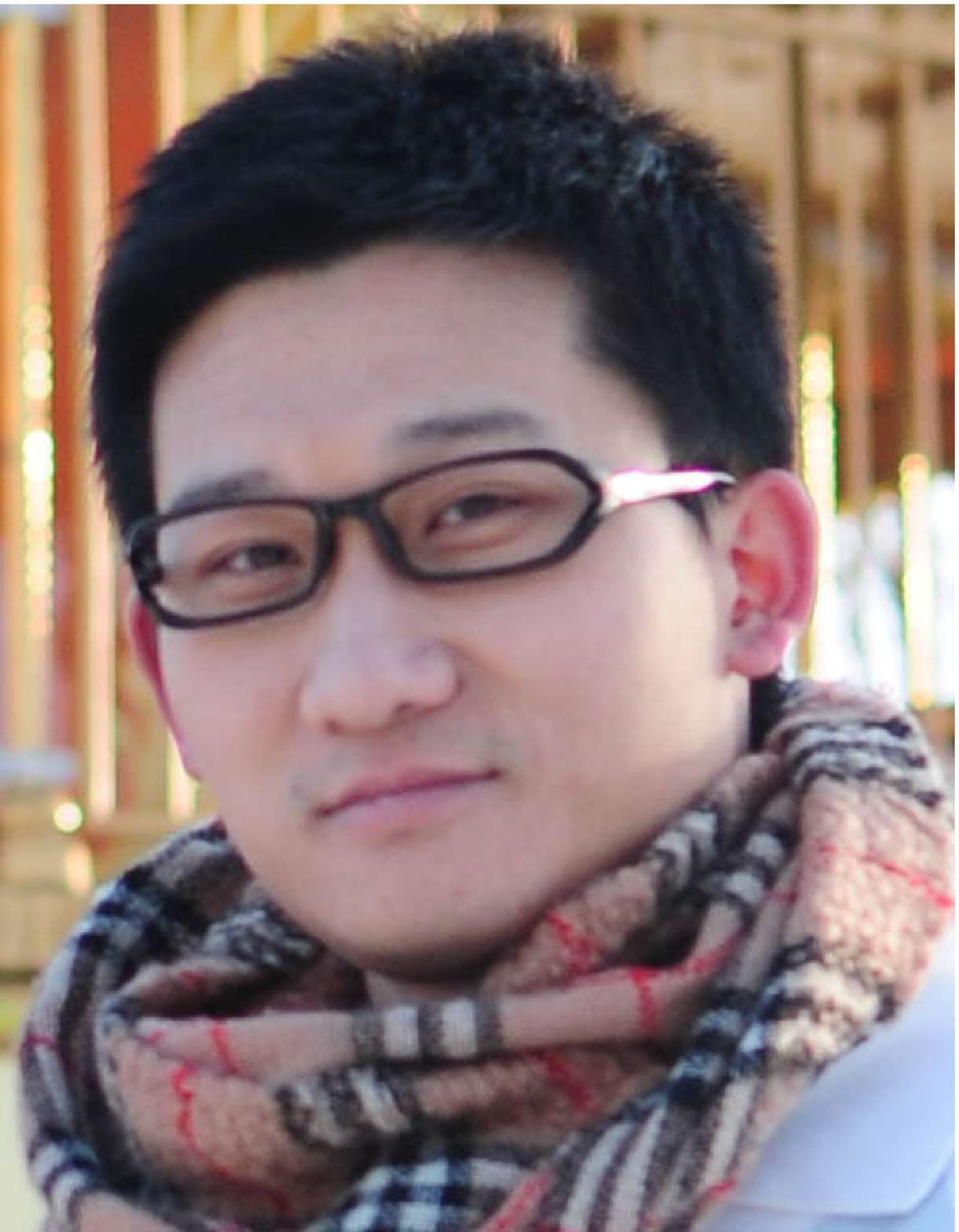}}]{Chen Li} received a bachelor's degree in engineering from the University of Science and Technology Beijing in 2008, a master's degree in science from Northeast Normal University in 2011, and a doctorate in engineering from the University of Siegen in Germany in 2016. From 2016 to 2017, he was engaged in postdoctoral research at the University of Siegen and Mainz in Germany. From 2011 to 2017, he worked as a part-time teaching assistant, lecturer, and master tutor at the University of Siegen and Mainz. Since April 2017, he has been engaged in scientific research and teaching in the School of Medicine and Bioinformatics Engineering of Northeastern University.
\end{IEEEbiography}

\begin{IEEEbiography}[{\includegraphics[width=1in, height=1.25in, clip,keepaspectratio]{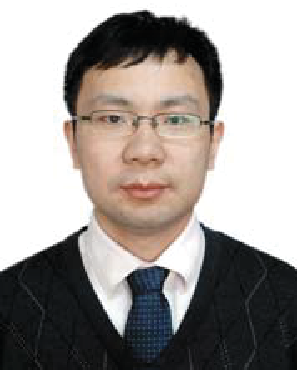}}]{Shouliang Qi} is an associate professor at Northeastern University, China. After receiving a philosophy doctorate from Shanghai Jiao Tong University in 2007, he joined the GE Global Research Center and was responsible for designing an innovative magnetic resonance imaging (MRI) system. In 2014-2015, he worked as a visiting scholar at Eindhoven University of Technology and Epilepsy Center Kempenhaeghe, The Netherlands. In recent years, he has been conducting productive studies in advanced MRI technology, intelligent medical imaging computing, modeling, and the convergence of Nano-Bio-Info-Cogn (NBIC). He has published more than 60 papers in peer-reviewed journals and international conferences. He has won many academic awards including the Chinese Excellent PH. D. Dissertation Nomination Award and the Award for Outstanding Achievement in Scientific Research from the Ministry of Education.
\end{IEEEbiography}

\begin{IEEEbiography}[{\includegraphics[width=1in, height=1.25in, clip,keepaspectratio]{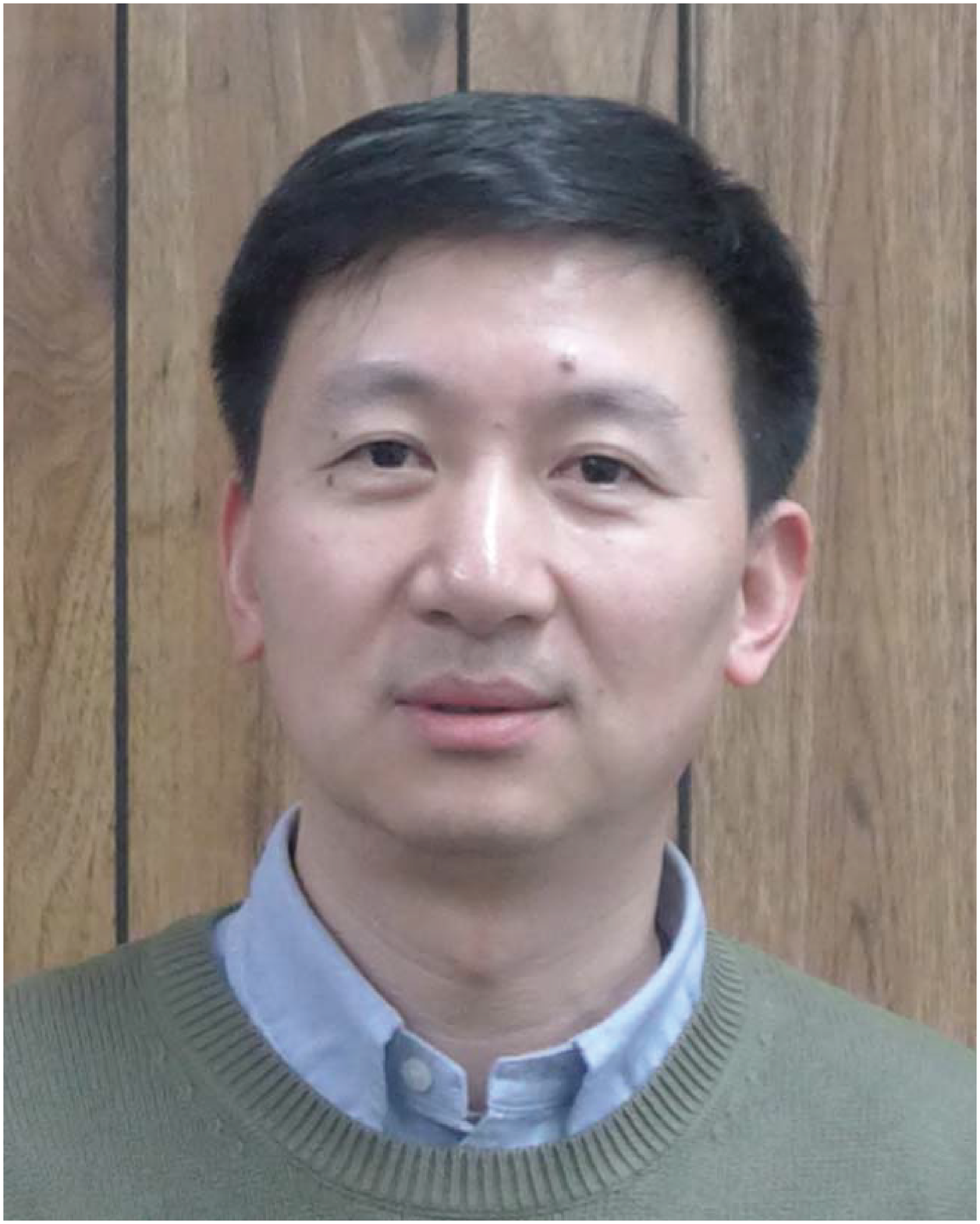}}]{Yudong Yao} (S'88-M'88-SM'94-F'11) received B.Eng. and M.Eng. degrees from Nanjing University of Posts and Telecommunications, Nanjing, in 1982 and 1985, respectively, and a Ph.D. degree from Southeast University, Nanjing, in 1988, all in electrical engineering. He was a visiting student at Carleton University, Ottawa, in 1987 and 1988. Dr. Yao has been with Stevens Institute of Technology, Hoboken, New Jersey, since 2000 and is currently a professor and department director of electrical and computer engineering. He is also a director of Stevens' Wireless Information Systems Engineering Laboratory (SLAB). Previously, from 1989 to 2000, Dr. Yao worked for Carleton University, Spar Aerospace, Ltd., Montreal, and Qualcomm, Inc., San Diego. He has been active in a nonprofit organization, WOCC, Inc., promoting wireless and optical communications research and technical exchange. He served as WOCC president (2008-2010) and chairman of the board of trustees (2010-2012). His research interests include wireless communications and networking, cognitive radio, machine learning, and big data analytics. He holds one Chinese patent and thirteen U.S. patents. Dr. Yao was an associate editor of IEEE Communications Letters (2000-2008) and IEEE Transactions on Vehicular Technology (2001-2006) and an editor for IEEE Transactions on Wireless Communications (2001-2005). He was elected an IEEE Fellow in 2011 for his contributions to wireless communications systems and was an IEEE ComSoc Distinguished Lecturer (2015-2018). In 2015, he was elected a Fellow of the National Academy of Inventors.
\end{IEEEbiography}
\begin{IEEEbiography}[{\includegraphics[width=1in, height=1.25in, clip,keepaspectratio]{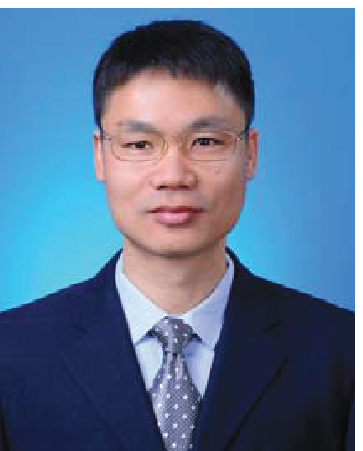}}]{Yueyang Teng} received his bachelor's and master's degrees from the Department of Applied Mathematics, Liaoning Normal University and Dalian University of Technology, China, in 2002 and 2005, respectively. From 2005 to 2013, he was a software engineer at Neusoft Position Medical Systems Co., Ltd. In 2013, he received his doctorate in Computer Software and Theory from Northeastern University. From 2013 to the present, he has been an associate professor in the School of Medical and Bioinformatics Engineering, Northeastern University. His research interests include image processing and machine learning.
\end{IEEEbiography}

\end{document}